\def\eqref#1{equation~\ref{#1}}
\def\1{\bm{1}}
\DeclareMathAlphabet{\mathsfit}{\encodingdefault}{\sfdefault}{m}{sl}
\SetMathAlphabet{\mathsfit}{bold}{\encodingdefault}{\sfdefault}{bx}{n}
\newcommand{\tens}[1]{\bm{\mathsfit{#1}}}
\def\tE{{\tens{E}}}
\def\tM{{\tens{M}}}
\def\tN{{\tens{N}}}
\def\tR{{\tens{R}}}
\def\tS{{\tens{S}}}
\def\tT{{\tens{T}}}
\pgfplotsset{compat=1.18} 
\title{KLay: Accelerating Arithmetic Circuits \\ for Neurosymbolic AI}
\author{Jaron Maene \& Vincent Derkinderen \\
Department of Computer Science\\
KU Leuven\\
Leuven, Belgium \\
\texttt{\{jaron.maene,vincent.derkinderen\}@kuleuven.be} \\
\And
Pedro Zuidberg Dos Martires \\
Centre for Applied Autonomous Sensor Systems \\
\"Orebro University \\
\"Orebro, Sweden \\
\texttt{pedro.zuidberg-dos-martires@oru.se}
}
\newcommand{\eg}{e.g.\xspace}
\newcommand{\ie}{i.e.\xspace}
\theoremstyle{definition}
\newcommand{\klay}{\textsc{KLay}\xspace}
\newcommand{\juice}{\textsc{Juice}\xspace}
\newcommand{\nodes}{\ensuremath{\tN}}
\newcommand{\edges}{\ensuremath{\tE}}
\begin{document}

\maketitle

\begin{abstract}
A popular approach to neurosymbolic AI involves mapping logic formulas to arithmetic circuits (computation graphs consisting of sums and products) and passing the outputs of a neural network through these circuits. This approach enforces symbolic constraints onto a neural network in a principled and end-to-end differentiable way. Unfortunately, arithmetic circuits are challenging to run on modern tensor accelerators as they exhibit a high degree of irregular sparsity.
To address this limitation, we introduce knowledge layers (\klay), a new data structure to represent arithmetic circuits that can be efficiently parallelized on GPUs.
Moreover, we contribute two algorithms used in the translation of traditional circuit representations to \klay and a further algorithm that exploits parallelization opportunities during circuit evaluations.
We empirically show that \klay achieves speedups of multiple orders of magnitude over the state of the art, thereby paving the way towards scaling neurosymbolic AI to larger real-world applications.
\end{abstract}

\section{Introduction}

Interest in neurosymbolic AI~\citep{hitzler2022neuro} continues to grow as the integration of symbolic reasoning and neural networks has been shown to increase reasoning capabilities~\citep{yi2018neural, trinh2024solving}, safety \citep{yang2023safe}, controllability~\citep{jiao2024valid}, and interpretability
~\citep{koh2020concept}. Furthermore, neurosymbolic methods often require less data by allowing a richer and more explicit set of priors~\citep{diligenti2017semantic, manhaeve2018deepproblog}.

However, as the computational structure of many neurosymbolic models is partially dense (in its neural component) and partially sparse (in its symbolic component)%
, efficiently learning neurosymbolic models still presents a challenge \citep{wan2024towards}.
So far, the symbolic components of these neurosymbolic models have struggled to fully exploit the potential of modern GPUs or TPUs.

Our work focuses on a particular flavor of neurosymbolic AI, pioneered by \citet{xu2018semantic} and \citet{manhaeve2018deepproblog}, which performs probabilistic inference on the outputs of a neural network. 
This is achieved by encoding the symbolic knowledge using arithmetic circuits.
While arithmetic circuits are end-to-end differentiable, they also pose certain challenges. In particular, arithmetic circuits are ill-suited to be evaluated in terms of dense tensor operations due to their high degree of irregular sparsity.
In this work, we address the challenge of optimizing neurosymbolic architectures to efficiently leverage available widely available hardware.
To this end, we present \klay: a new data structure representing arithmetic circuits as \textbf{k}nowledge \textbf{lay}ers which can exploit the parallel compute present in modern GPUs or TPUs.

The main advantage of \klay is that it reduces arithmetic circuit evaluations to index and scatter operations -- operations already present in popular tensor libraries. This allows for the embarrassingly parallel nature of these knowledge layers to be harnessed.
Importantly and in contrast to alternative approaches that speed up sparse neurosymbolic computation graphs~\citep{dadu2019towards,liuscaling},
we forego the need for custom hardware-specific implementations.
This makes \klay completely agnostic towards the underlying hardware. By leveraging the compiler stacks of open-source tensor libraries, \klay can furthermore considerably outperform existing hand-written CUDA kernels.

\section{Arithmetic Circuits and Neurosymbolic AI}

The symbolic knowledge in neurosymbolic AI is commonly specified as Boolean logic. 
\emph{Boolean circuits} are a compact representation of Boolean logic using directed acyclic graphs \citep{darwiche2021tractable}. More specifically, the leaves in Boolean circuits correspond to Boolean variables (or their negation), while inner nodes are either $\land$-gates or $\lor$-gates. We make the usual assumption that Boolean circuits do not contain negation, known as \emph{negation normal form} (NNF). Figure~\ref{fig:circuit_example:nnf} contains an example of a Boolean circuit. A circuit can be evaluated for a set of inputs by a simple post-order traversal of the graph, meaning children get evaluated before their parents. More formally, the value $v(n)$ of a node $n$ is a defined as
\begin{align}
    v(n) = \begin{cases}
        l_n & \text{if $n$ is a leaf node,} \\
        \bigwedge_{c \in \mathcal{C}_n} v(c) & \text{if $n$ is a $\land$-gate,} \\
        \bigvee_{c \in \mathcal{C}_n} v(c) & \text{if $n$ is a $\lor$-gate.} \\
    \end{cases}
\end{align}
We write $l_n$ for the Boolean input value of the leaf node $n$ and $\mathcal{C}_n$ for the set of children of an inner node $n$. The evaluation of a circuit is a function $\mathbb{B}^N \to \mathbb{B}$ which maps the Boolean inputs of the leaves to the value of the root node.

In order to render Boolean circuits useful in the context of neurosymbolic AI, we first need to perform a so-called knowledge compilation step~\citep{darwiche2002knowledge}. This compilation step transforms an NNF circuit into \textit{deterministic decomposable negation normal form} (d-DNNF).
\textit{Determinism}~\citep{darwiche2001tractable} and \textit{decomposability}~\citep{darwiche2001decomposable} are two properties that guarantee certain computations can be performed tractably, such as finding the number of satisfying assignments. We refer the reader to \citet{vergari2021compositional} for an in-depth discussion on tractable computations on circuits.
Figure~\ref{fig:circuit_example:ddnnf} contains the d-DNNF circuit resulting from knowledge-compiling the circuit in Figure~\ref{fig:circuit_example:nnf}.

The important advantage of d-DNNF circuits over general NNF circuits is that they allow linear time probabilistic inference. Assume first that the Boolean variables are not deterministic anymore but constitute Bernoulli random variables. We can then compute the probability of the d-DNNF circuit evaluating to true under the input distribution by labeling the leaves of the circuit with the probabilities of the Boolean variables and replacing $\land$- and $\lor$-gates with $\times$ and $+$ operations, respectively. The resulting circuit is also called an arithmetic circuit~\citep{darwiche2003differential} and is displayed in Figure~\ref{fig:circuit_example:ac}.

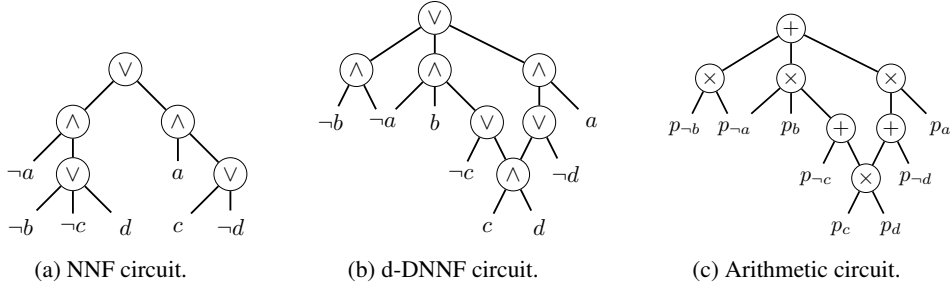
\begin{figure}
    \begin{subfigure}[t]{0.29\textwidth}
        \centering
        \resizebox{\linewidth}{!}{%
        \tikzstyle{union} = [rounded corners,text centered, draw=black]
\tikzstyle{intersection} = [rounded corners,text centered, draw=black]
\tikzstyle{times} = [circle,text centered, draw=black, inner sep=2pt]
\tikzstyle{plus} = [circle,text centered, draw=black, inner sep=2pt]
\tikzstyle{int} = [circle, text centered, draw=black]
\tikzstyle{leaf} = [rounded corners, text centered, draw=none, fill=none] 
\tikzstyle{arrow} = [thick,-,>=stealth]

\begin{tikzpicture} [node distance=0.8cm]
    \node (root) [plus] {$\lor$};
    \node (anda) [times, below of=root, left of=root] {$\land$};
    \node (andna) [times, below of=root, right of=root] {$\land$};
    
    \node (lita) [leaf, below of=anda, left of=anda] {$\neg a$};
    \node (plusa) [plus, below of=anda] {$\lor$};
    \node (litnb) [leaf, below of=plusa, left of=plusa] {$\neg b$};
    \node (litnc) [leaf, below of=plusa] {$\neg c$};
    \node (litd) [leaf, below of=plusa, right of=plusa] {$d$};

    \node (litna) [leaf, below of=andna] {$a$};
    \node (plusna) [plus, below of=andna, right of=andna] {$\lor$};
    \node (litc) [leaf, below of=plusna, left of=plusna] {$c$};
    \node (litnd) [leaf, below of=plusna] {$\neg d$};

    \draw [arrow] (root) -- (anda);
    \draw [arrow] (root) -- (andna);
    \draw [arrow] (anda) -- (lita);
    \draw [arrow] (anda) -- (plusa);
    \draw [arrow] (andna) -- (litna);
    \draw [arrow] (andna) -- (plusna);
    
    \draw [arrow] (plusa) -- (litnb);
    \draw [arrow] (plusa) -- (litnc);
    \draw [arrow] (plusa) -- (litd);
    \draw [arrow] (plusna) -- (litc);
    \draw [arrow] (plusna) -- (litnd);

\end{tikzpicture}
  
        }
        \caption{NNF circuit.}
        \label{fig:circuit_example:nnf}
    \end{subfigure}
    \begin{subfigure}[t]{0.33\textwidth}
        \centering
        \resizebox{\linewidth}{!}{%
        \tikzstyle{union} = [rounded corners,text centered, draw=black]
\tikzstyle{intersection} = [rounded corners,text centered, draw=black]
\tikzstyle{times} = [circle,text centered, draw=black, inner sep=2pt]
\tikzstyle{plus} = [circle,text centered, draw=black, inner sep=2pt]
\tikzstyle{int} = [circle, text centered, draw=black]
\tikzstyle{leaf} = [rounded corners, text centered, draw=none, fill=none] 
\tikzstyle{arrow} = [thick,-,>=stealth]

\begin{tikzpicture} [node distance=0.8cm]
        \node (n0) [plus] {$\lor$};
        \node (n11) [times, below of=n0] {$\land$};
        \node (n10) [times, left of=n11, xshift=-0.4cm] {$\land$};

        \node (negB) [leaf, below of=n10, left of=n10, xshift=0.4cm] {$\neg b$};
        \node (negA) [leaf, below of=n10, right of=n10, xshift=-0.4cm] {$\neg a$};

        \node (nB) [leaf, below of=n11] {$b$}; 
        \node (n20) [plus, right of=nB] {$\lor$};
        \node (negC) [leaf, below of=n20, left of=n20, xshift=0.4cm] {$\neg c$};

        \node (n30) [times, below of=n20, right of=n20, xshift=-0.4cm] {$\land$};
        \node (nC) [leaf, below of=n30, left of=n30, xshift=0.4cm] {$c$};
        \node (nD) [leaf, below of=n30, right of=n30, xshift=-0.4cm] {$d$};

        \node (n21) [plus, above of=n30, right of=n30, xshift=-0.4cm] {$\lor$};
        \node (negD) [leaf, below of=n21, right of=n21, xshift=-0.4cm] {$\neg d$};
        \node (n12) [times, above of=n21] {$\land$};
        \node (nA) [leaf, below of=n12, right of=n12] {$a$};
        

        \draw [arrow] (n0) -- (n10);
        \draw [arrow] (n0) -- (n11);
        \draw [arrow] (n0) -- (n12);

        \draw [arrow] (n10) -- (negB);
        \draw [arrow] (n10) -- (negA);

        \draw [arrow] (n11) -- (negA);
        \draw [arrow] (n11) -- (nB);

        \draw [arrow] (n11) -- (n20);
        \draw [arrow] (n20) -- (n30);
        \draw [arrow] (n20) -- (negC);
        \draw [arrow] (n21) -- (n30);
        \draw [arrow] (n21) -- (negD);
        \draw [arrow] (n12) -- (n21);
        \draw [arrow] (n12) -- (nA);

        \draw [arrow] (n30) -- (nC);
        \draw [arrow] (n30) -- (nD);

\end{tikzpicture}
  
        }
        \caption{d-DNNF circuit.}
        \label{fig:circuit_example:ddnnf}
    \end{subfigure}
    \begin{subfigure}[t]{0.33\textwidth}
        \centering
        \resizebox{\linewidth}{!}{%



\tikzstyle{union} = [rounded corners,text centered, draw=black]
\tikzstyle{intersection} = [rounded corners,text centered, draw=black]
\tikzstyle{times} = [circle,text centered, draw=black, inner sep=1pt]
\tikzstyle{plus} = [circle,text centered, draw=black, inner sep=1pt]
\tikzstyle{int} = [circle, text centered, draw=black]
\tikzstyle{leaf} = [rounded corners, text centered, draw=none, fill=none] 
\tikzstyle{arrow} = [thick,-,>=stealth]

\begin{tikzpicture} [node distance=0.8cm]
        \node (n0) [plus] {$+$};
        \node (n11) [times, below of=n0] {$\times$};
        \node (n10) [times, left of=n11, xshift=-0.5cm] {$\times$};

        \node (negB) [leaf, below of=n10, left of=n10, xshift=0.4cm] {$p_{\neg b}$};
        \node (negA) [leaf, below of=n10, right of=n10, xshift=-0.4cm] {$p_{\neg a}$};

        \node (nB) [leaf, below of=n11] {$p_b$}; 
        \node (n20) [plus, right of=nB] {$+$};
        \node (negC) [leaf, below of=n20, left of=n20, xshift=0.4cm] {$p_{\neg c}$};

        \node (n30) [times, below of=n20, right of=n20, xshift=-0.4cm] {$\times$};
        \node (nC) [leaf, below of=n30, left of=n30, xshift=0.4cm] {$p_c$};
        \node (nD) [leaf, below of=n30, right of=n30, xshift=-0.4cm] {$p_d$};

        \node (n21) [plus, above of=n30, right of=n30, xshift=-0.4cm] {$+$};
        \node (negD) [leaf, below of=n21, right of=n21, xshift=-0.4cm] {$p_{\neg d}$};
        \node (n12) [times, above of=n21] {$\times$};
        \node (nA) [leaf, below of=n12, right of=n12] {$p_{a}$};
        

        \draw [arrow] (n0) -- (n10);
        \draw [arrow] (n0) -- (n11);
        \draw [arrow] (n0) -- (n12);

        \draw [arrow] (n10) -- (negB);
        \draw [arrow] (n10) -- (negA);

        \draw [arrow] (n11) -- (negA);
        \draw [arrow] (n11) -- (nB);

        \draw [arrow] (n11) -- (n20);
        \draw [arrow] (n20) -- (n30);
        \draw [arrow] (n20) -- (negC);
        \draw [arrow] (n21) -- (n30);
        \draw [arrow] (n21) -- (negD);
        \draw [arrow] (n12) -- (n21);
        \draw [arrow] (n12) -- (nA);

        \draw [arrow] (n30) -- (nC);
        \draw [arrow] (n30) -- (nD);
        
\end{tikzpicture}
        }
        \caption{Arithmetic circuit.}
        \label{fig:circuit_example:ac}
    \end{subfigure}
    \caption{Example of (a) a Boolean circuit, (b) its d-DNNF version, and (c) the corresponding arithmetic circuit. 
    The arithmetic circuit replaces Boolean variables with their probabilities.
    Consequently, the arithmetic circuit computes the probability of the underlying Boolean circuit being satisfied. In a neurosymbolic setting, the input probabilities are predicted by a neural network.} 
    \label{fig:circuit_example}
\end{figure}

While probabilistic inference on d-DNNF circuits has linear time complexity, transforming a NNF circuit into d-DNNF is \#P-hard~\citep{valiant1979complexity}. However, once the d-DNNF structure is obtained we can re-evaluate the circuit with different probabilities for the Boolean variables in the leaves. This approach has been very successful in probabilistic inference for a variety of models such as Bayesian networks~\citep{chavira2008probabilistic} and more general probabilistic programs~\citep{fierens2015inference}.

This compile once and evaluate often paradigm has also gained traction in neurosymbolic systems~\citep{manhaeve2018deepproblog,ahmed2022semantic,desmet2023neural}. The high-level idea behind these approaches is to compile the symbolic knowledge once into an arithmetic circuit and let a neural network predict the probabilities to be fed into the arithmetic circuit. 
Given that the arithmetic circuit consists only of sum and product operations, the resulting computation graph (neural network + arithmetic circuit) is end-to-end differentiable and the parameters can be optimized using standard gradient descent methods. Furthermore, as gradient descent is an iterative method, the circuit (and its gradient) needs to be evaluated repeatedly. This yields a convenient return on investment for the initial expensive knowledge compilation step, which now amortizes over multiple evaluations. Appendix~\ref{app:nesy_example} explains in more detail how neural networks and circuits can be used together on a simple example.

As discussed in the introduction, arithmetic circuits currently hinder the efficient application of neurosymbolic methods as these circuits are not well-suited to be evaluated as parallel tensor operations. %
As a matter of fact, the standard way to evaluate an arithmetic circuit in a neurosymbolic context is to naively evaluate every node one by one \citep{manhaeve2018deepproblog,ahmed2022pylon,ahmed2022semantic} using a naive traversal of the arithmetic circuit. 
Although this naive traversal of the circuit allows for a certain degree of data parallelism, it fails to fully utilize the capacity of modern GPUs. Consequently, many existing frameworks in practice just evaluate the entire computation graph on the CPU ~\citep{manhaeve2018deepproblog,ahmed2022semantic}.

\section{Layerizing Circuits}
\label{sec:layerization} 

In this section, we show how to map a d-DNNF circuit to our layerized \klay representation. Afterward, in Section~\ref{sec:evaluation}, we discuss how the resulting \klay representation can be run efficiently on GPUs. Although our exposition focuses on d-DNNF circuits, it is more generally applicable to NNF circuits, meaning \klay could also be of interest to neurosymbolic systems based on, \eg fuzzy logic \citep{badreddine2022logic} or grammars \citep{winters2022deepstochlog}.

\subsection{From Linked Nodes towards Layers}
\label{sec:linked2layer}

In order to parallelize a d-DNNF circuit, we group the nodes into sets of nodes that can be evaluated in parallel. We dub these groups layers -- reminiscent of layers in neural networks. Concretely, for each node $n$ in a circuit $C$ we compute its height in the circuit $h_n$, and nodes with the same height are assigned to the same layer.
\begin{align}
    L_i = \{ n \in C \mid h_n = i \}
    \quad\text{where}\quad
    h_n=
    \begin{cases}
            0, & \text{if $n$ is a leaf,}
            \\
            \max_{c\in \mathcal{C}_n} h_c + 1 & \text{otherwise}.
            \label{eq:layers}
    \end{cases}
\end{align}

\begin{figure}
    \centering
    \tikzstyle{union} = [rounded corners,text centered, draw=black]
\tikzstyle{intersection} = [rounded corners,text centered, draw=black]
\tikzstyle{times} = [circle,text centered, draw=black, inner sep=2pt]
\tikzstyle{plus} = [circle,text centered, draw=black, inner sep=2pt]
\tikzstyle{int} = [circle, text centered, draw=black]
\tikzstyle{leaf} = [rounded corners, text centered, draw=none, fill=none] 
\tikzstyle{arrow} = [thick,-,>=stealth]

\begin{tikzpicture} [node distance=0.8cm]

        \node (nb) [leaf] {$\neg b$};
        \node (na) [leaf, right of=nb] {$\neg a$};
        \node (b) [leaf, right of=na] {$b$};
        \node (nc) [leaf, right of=b] {$\neg c$};
        \node (c) [leaf, right of=nc] {$c$};
        \node (d) [leaf, right of=c] {$d$};
        \node (nd) [leaf, right of=d] {$\neg d$};
        \node (a) [leaf, right of=nd] {$a$};


        \node (n10) [times, above of=nb] {$\land$};
        \node (n11) [times, above of=na] {$\land$};
        \node (n12) [times, dashed, above of=b] {$\land$};
        \node (n13) [times, above of=nc] {$\land$};
        \node (n14) [times, above of=d] {$\land$};
        \node (n15) [times, dashed, above of=nd] {$\land$};
        \node (n16) [times, dashed, above of=a] {$\land$};

        \draw [arrow] (n10) -- (nb);
        \draw [arrow] (n10) -- (na);
        \draw [arrow] (n11) -- (na);
        \draw [arrow] (n12) -- (b);
        \draw [arrow] (n13) -- (nc);
        \draw [arrow] (n14) -- (c);
        \draw [arrow] (n14) -- (d);
        \draw [arrow] (n15) -- (nd);
        \draw [arrow] (n16) -- (a);

        \node (n20) [times, dashed, above of=n10] {$\lor$};
        \node (n21) [times, above of=n11] {$\lor$};
        \node (n22) [times, dashed, above of=n12] {$\lor$};
        \node (n23) [times, above of=n13] {$\lor$};
        \node (n24) [times, above of=n15] {$\lor$};
        \node (n25) [times, dashed, above of=n16] {$\lor$};
        
        \draw [arrow] (n10) -- (n20);
        \draw [arrow] (n11) -- (n21);
        \draw [arrow] (n12) -- (n22);
        \draw [arrow] (n13) -- (n23);
        \draw [arrow] (n14) -- (n23);
        \draw [arrow] (n14) -- (n24);
        \draw [arrow] (n15) -- (n24);
        \draw [arrow] (n16) -- (n25);

        
        \node (n30) [plus, dashed, above of=n20] {$\land$};
        \node (n31) [plus, above of=n22] {$\land$};
        \node (n32) [plus, above of=n25] {$\land$};

        \draw [arrow] (n30) -- (n20);
        \draw [arrow] (n31) -- (n21);
        \draw [arrow] (n31) -- (n22);
        \draw [arrow] (n31) -- (n23);
        \draw [arrow] (n32) -- (n24);
        \draw [arrow] (n32) -- (n25);


        \node (n40) [times, above of=n31, right of=n31] {$\lor$};

        \draw [arrow] (n30) -- (n40);
        \draw [arrow] (n31) -- (n40);
        \draw [arrow] (n32) -- (n40);

\node (l0t)   [right of=a, xshift=1cm] {layer $L_0$};
\node (l1t)   [above of=l0t] {layer $L_1$};
\node (l2t)   [above of=l1t] {Layer $L_2$};
\node (l3t)   [above of=l2t] {layer $L_3$};
\node (l4t)   [above of=l3t] {layer $L_4$};

\end{tikzpicture}
  
    \caption{A layered version of the d-DNNF in Figure~\ref{fig:circuit_example:ddnnf}. Dashed nodes indicate nodes that are not present in the original d-DNNF and which are introduced during the layerization of the circuit.
    }
    \label{fig:lbcircuit}
\end{figure}
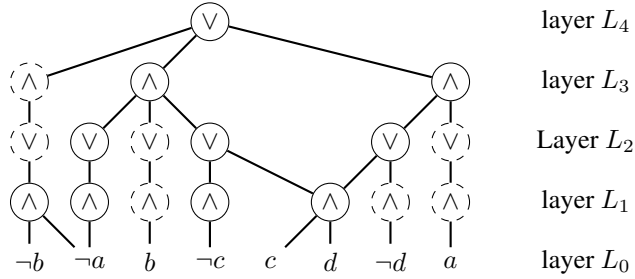

Here, we use $\mathcal{C}_n$ to denote the set of children of a node $n$. Note that the height of all nodes can be efficiently evaluated in a single post-order circuit traversal.
The initial layer, $L_0$, comprises all leaf nodes, while the last layer comprises the root node. 

Without loss of generality, we can assume that $\lor$-gates only have $\land$-gates as children and that $\land$-gates have either $\lor$-gates or leaf nodes as children \citep{choi2020probabilistic}.
This implies that $\land$- and $\lor$-gates appear in an alternating fashion throughout the circuit, which can be observed in Figure~\ref{fig:circuit_example:ddnnf}, and that all nodes in the same layer have the same type.

If nodes are assigned to layers based on their height $h_n$, the child of a node can be in any of the previous layers. However, to transform the circuit evaluation into a sequence of parallel operations, it is more convenient if all children are in the immediately preceding layer. In such a structure, the next layer can be computed solely using the current layer.

We obtain this layer-by-layer structure by introducing additional unary nodes.
Whenever a node $n \in L_{h_n}$ has a child $c$ in a non-immediately preceding layer $L_{h_c}$, \ie ${h_c} + 1 < h_n$, we introduce a chain of unary nodes, one per layer between $L_{h_c}$ and $L_{h_n}$, to connect $n$ to $c$ via these unary nodes.
This is illustrated in Figure~\ref{fig:lbcircuit}, where the newly introduced nodes are indicated by dashed circles. Note that the type of a unary node ($\lor$ or $\land$) is irrelevant and chosen to satisfy the assumption of alternating node types. Algorithm~\ref{alg:klay_layerization} in Appendix~\ref{app:pseudo-code} summarizes this layerization in pseudo-code.

\subsection{Multi-Rooted Circuits and Node Deduplication}\label{sec:multi-rooted-and-merging}

Suppose now that we have multiple circuits over a (partially) shared set of variables. A set of $M$ circuits effectively computes a Boolean function $f: \mathbb{B}^N \to \mathbb{B}^M$. Such $M$-dimensional circuits are also called multi-rooted circuits and are rather common in the standard neurosymbolic setting.
Indeed, unless all data has the same symbolic knowledge, batched training or inference requires multi-rooted circuits.
For instance, the widely-known MNIST addition problem~\citep{manhaeve2018deepproblog} can be solved using a multi-rooted arithmetic circuit with 19 roots, as it has 19 different possible outputs.

If all $M$ circuits comprising the multi-rooted circuit have already been layerized according to Section~\ref{sec:linked2layer}, we again have that nodes at the same height in different circuits are of the same type. We can hence simply combine the layers of the different circuits to further exploit parallelization.
Additionally, different circuits might share equivalent sub-circuits amongst each other. By correctly identifying and merging shared nodes, we avoid redundant computations. Figure~\ref{fig:lbacircuitmultiroot} gives an example of a multi-rooted d-DNNF circuit with shared nodes.

To merge all duplicate nodes, we need an efficient way to identify these nodes. We show that this is possible in linear time.

\begin{restatable}{theorem}{thmmerkle}
	\label{thm:merkle}
Given a set of circuits, all identical sub-circuits can be identified in linear expected time complexity in the total number of edges in the circuits.
\end{restatable}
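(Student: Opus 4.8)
The plan is to assign to every node, in a single bottom-up pass over all the given circuits, a canonical integer identifier via a Merkle-style fingerprint, arranged so that two nodes receive the same identifier exactly when they root identical sub-circuits. A hash table keyed on these fingerprints keeps the amortized work per node expected constant, which is where the ``expected'' in the statement comes from.

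Concretely, I would first fix an order in which every node is visited after all of its children. Such a reverse topological order is computable in time linear in the number of edges by a post-order traversal, and it is already available from the layered structure of \secref{sec:linked2layer}, in which every child of a node in layer $L_i$ lies in $L_{i-1}$. We maintain a map $\mathrm{id}(\cdot)$ from visited nodes to integers and a hash table $T$ from \emph{signatures} to integers. On visiting a leaf $n$ we set $\mathrm{sig}(n) = (\textsc{leaf}, l_n)$, the label it carries. On visiting an inner node $n$, whose children have all been visited, we set $\mathrm{sig}(n) = \big(\mathrm{type}(n),\,(\mathrm{id}(c_1),\dots,\mathrm{id}(c_k))\big)$ where $c_1,\dots,c_k$ are the children of $n$ put in a canonical order (\eg sorted by identifier), so that the signature does not depend on the incidental order in which $n$'s children happen to be stored. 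We then look up $\mathrm{sig}(n)$ in $T$: if it is present with value $j$, we declare $n$ a duplicate of the representative with identifier $j$ and set $\mathrm{id}(n)=j$; otherwise we give $n$ a fresh identifier and insert $(\mathrm{sig}(n),\mathrm{id}(n))$ into $T$. The partition of nodes into identifier classes is the output.

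Correctness is an induction on the height $h_n$: two nodes root identical sub-circuits iff they have the same type (resp.\ the same leaf label) and their children can be paired up so that paired children root identical sub-circuits. Since children are visited before their parents, the inductive hypothesis guarantees that identical child sub-circuits already carry equal identifiers and non-identical ones carry distinct identifiers, so after canonicalization equality of signatures is equivalent to equality of the sub-circuits rooted at the two nodes. For the running time, building $\mathrm{sig}(n)$ and performing one lookup or insertion on a key of length $O(1+|\mathcal{C}_n|)$ costs $O(1+|\mathcal{C}_n|)$ expected time under a universal family of hash functions on variable-length keys; summing over all nodes yields $O\!\big(\sum_n(1+|\mathcal{C}_n|)\big)$, which is linear in the number of edges (each inner node contributes at least one edge, and each leaf is handled in $O(1)$).

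The step that actually requires care is keeping the per-node cost $O(1+|\mathcal{C}_n|)$ rather than $O(|\mathcal{C}_n|\log|\mathcal{C}_n|)$: canonicalizing each child list by comparison sorting would cost an extra logarithmic factor and only give $O(|E|\log|E|)$ overall. I would remove it by canonicalizing globally -- radix/bucket-sorting the set of all parent--child incidences in one sweep (processed layer by layer, touching only the buckets in use), which is $O(|E|)$ in total and produces every child list in sorted order at once -- or, alternatively, by replacing the sorted child list with an order-independent randomized fingerprint of the child multiset, such as $\prod_{c\in\mathcal{C}_n}(z-\mathrm{id}(c))$ evaluated modulo a large prime at a random point $z$, at the price of an exponentially small probability of a spurious merge that is absorbed into the ``expected'' qualifier. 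This, together with the universal-hashing analysis of $T$, is the crux; the remainder is bookkeeping.
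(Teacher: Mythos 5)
Your proposal is correct and takes essentially the same route as the paper: a single bottom-up pass that assigns each node a Merkle-style fingerprint of its children's identifiers and deduplicates via a hash table, with correctness by induction on height and linear expected time from hashing. Your second canonicalization option (an order-independent fingerprint of the child multiset) is in fact exactly what the paper uses, via the permutation-invariant $\bigoplus$ in Equation~\ref{eq:hash}; your discussion of the sorting cost and collision probability is simply a more careful treatment of details the paper sidesteps by assuming a perfect hashing scheme.
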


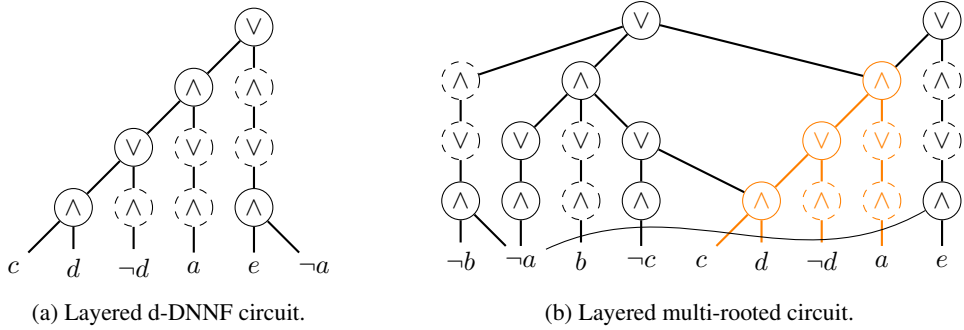
\begin{figure}
    \begin{subfigure}[t]{0.4\textwidth}
    \centering
    \tikzstyle{union} = [rounded corners,text centered, draw=black]
\tikzstyle{intersection} = [rounded corners,text centered, draw=black]
\tikzstyle{times} = [circle,text centered, draw=black, inner sep=2pt]
\tikzstyle{plus} = [circle,text centered, draw=black, inner sep=2pt]
\tikzstyle{int} = [circle, text centered, draw=black]
\tikzstyle{leaf} = [rounded corners, text centered, draw=none, fill=none] 
\tikzstyle{arrow} = [thick,-,>=stealth]

\begin{tikzpicture} [node distance=0.8cm]

        \node (c) [leaf] {$c$};
        \node (d) [leaf, right of=c] {$d$};
        \node (nd) [leaf, right of=d] {$\neg d$};
        \node (a) [leaf, right of=nd] {$a$};
        \node (e) [leaf, right of=a] {$e$};
        \node (na) [leaf, right of=e] {$\neg a$};


        \node (n14) [times, above of=d] {$\land$};
        \node (n15) [times, dashed, above of=nd] {$\land$};
        \node (n16) [times, dashed, above of=a] {$\land$};
        \node (n17) [times, above of=e] {$\land$};

        \draw [arrow] (n14) -- (c);
        \draw [arrow] (n14) -- (d);
        \draw [arrow] (n15) -- (nd);
        \draw [arrow] (n16) -- (a);
        \draw [arrow] (n17) -- (na);
        \draw [arrow] (n17) -- (e);

        \node (n24) [times, above of=n15] {$\lor$};
        \node (n25) [times, dashed, above of=n16] {$\lor$};
        \node (n26) [times, dashed, above of=n17] {$\lor$};

        \draw [arrow] (n14) -- (n24);
        \draw [arrow] (n15) -- (n24);
        \draw [arrow] (n16) -- (n25);
        \draw [arrow] (n17) -- (n26);

        
        \node (n32) [plus, above of=n25] {$\land$};
        \node (n33) [plus, dashed, above of=n26] {$\land$};

        \draw [arrow] (n32) -- (n24);
        \draw [arrow] (n32) -- (n25);
        \draw [arrow] (n33) -- (n26);


        \node (n41) [times, above of=n33] {$\lor$};

        \draw [arrow] (n32) -- (n41);
        \draw [arrow] (n33) -- (n41);

\end{tikzpicture}
  
    \caption{Layered d-DNNF circuit.}
    \label{fig:lbcircuitroot2}
    \end{subfigure}
    \hfill
    \begin{subfigure}[t]{0.59\textwidth}
    \centering
    \tikzstyle{union} = [rounded corners,text centered, draw=black]
\tikzstyle{intersection} = [rounded corners,text centered, draw=black]
\tikzstyle{times} = [circle,text centered, draw=black, inner sep=2pt]
\tikzstyle{plus} = [circle,text centered, draw=black, inner sep=2pt]
\tikzstyle{int} = [circle, text centered, draw=black]
\tikzstyle{leaf} = [rounded corners, text centered, draw=none, fill=none] 
\tikzstyle{arrow} = [thick,-,>=stealth]

\begin{tikzpicture} [node distance=0.8cm]

        \node (nb) [leaf] {$\neg b$};
        \node (na) [leaf, right of=nb] {$\neg a$};
        \node (b) [leaf, right of=na] {$b$};
        \node (nc) [leaf, right of=b] {$\neg c$};
        \node (c) [leaf, right of=nc] {$c$};
        \node (d) [leaf, right of=c] {$d$};
        \node (nd) [leaf, right of=d] {$\neg d$};
        \node (a) [leaf, right of=nd] {$a$};
        \node (e) [leaf, right of=a] {$e$};


        \node (n10) [times, above of=nb] {$\land$};
        \node (n11) [times, above of=na] {$\land$};
        \node (n12) [times, dashed, above of=b] {$\land$};
        \node (n13) [times, above of=nc] {$\land$};
        \node (n14) [times, orange, above of=d] {$\land$};
        \node (n15) [times, orange, dashed, above of=nd] {$\land$};
        \node (n16) [times, orange, dashed, above of=a] {$\land$};
        \node (n17) [times, above of=e] {$\land$};

        \draw [arrow] (n10) -- (nb);
        \draw [arrow] (n10) -- (na);
        \draw [arrow] (n11) -- (na);
        \draw [arrow] (n12) -- (b);
        \draw [arrow] (n13) -- (nc);
        \draw [arrow, orange] (n14) -- (c);
        \draw [arrow, orange] (n14) -- (d);
        \draw [arrow, orange] (n15) -- (nd);
        \draw [arrow, orange] (n16) -- (a);
        \draw [arrow] (n17) -- (e);

        \draw (na) to[arrow, out=25, in=-150] (n17);

        \node (n20) [times, dashed, above of=n10] {$\lor$};
        \node (n21) [times, above of=n11] {$\lor$};
        \node (n22) [times, dashed, above of=n12] {$\lor$};
        \node (n23) [times, above of=n13] {$\lor$};
        \node (n24) [times, orange, above of=n15] {$\lor$};
        \node (n25) [times, orange, dashed, above of=n16] {$\lor$};
        \node (n26) [times, dashed, above of=n17] {$\lor$};

        \draw [arrow] (n10) -- (n20);
        \draw [arrow] (n11) -- (n21);
        \draw [arrow] (n12) -- (n22);
        \draw [arrow] (n13) -- (n23);
        \draw [arrow] (n14) -- (n23);
        \draw [arrow, orange] (n14) -- (n24);
        \draw [arrow, orange] (n15) -- (n24);
        \draw [arrow, orange] (n16) -- (n25);
        \draw [arrow] (n17) -- (n26);

        
        \node (n30) [plus, dashed, above of=n20] {$\land$};
        \node (n31) [plus, above of=n22] {$\land$};
        \node (n32) [plus, orange, above of=n25] {$\land$};
        \node (n33) [plus, dashed, above of=n26] {$\land$};

        \draw [arrow] (n30) -- (n20);
        \draw [arrow] (n31) -- (n21);
        \draw [arrow] (n31) -- (n22);
        \draw [arrow] (n31) -- (n23);
        \draw [arrow, orange] (n32) -- (n24);
        \draw [arrow, orange] (n32) -- (n25);
        \draw [arrow] (n33) -- (n26);


        \node (n40) [times, above of=n31, right of=n31] {$\lor$};
        \node (n41) [times, above of=n33] {$\lor$};

        \draw [arrow] (n30) -- (n40);
        \draw [arrow] (n31) -- (n40);
        \draw [arrow] (n32) -- (n40);
        \draw [arrow] (n32) -- (n41);
        \draw [arrow] (n33) -- (n41);

\end{tikzpicture}
  
    \caption{Layered multi-rooted circuit.}
    \label{fig:lbcircuitmerged}
    \end{subfigure}
    \caption{(\subref{fig:lbcircuitroot2}) contains a layered d-DNNF circuit over the variables $a$, $c$, $d$, and $e$. The dashed circles indicate again nodes introduced during the layerization process.
    (\subref{fig:lbcircuitmerged}) illustrates a multi-rooted circuit where the upper left node computes the circuit from Figure~\ref{fig:lbcircuit} and where the upper right node computes the circuit from Figure~(\subref{fig:lbcircuitroot2}). Note how nodes at the same height in the multi-rooted circuit are of the same type and how the sub-circuit colored in orange is used to compute both root nodes. 
    }
    \label{fig:lbacircuitmultiroot}
\end{figure}

The proof is included in Appendix~\ref{app:nohashcollision}. For canonical circuits, such as SDDs with the same variable tree\footnote{A sentential decision diagram (SDD) is a subclass of d-DNNF circuits that are canonical given a variable tree. We refer to \cite{darwiche2011sdd} for more details.}, this result is strengthened from syntactically identical sub-circuits to semantic equivalence. We realize Theorem~\ref{thm:merkle} using Merkle hashes \citep{merkle1987digital}. That is, we associate a hash $hash(n)$ with every node $n$ in a recursive fashion. For each leaf node, we hash the associated (negated) variable. For each internal node, we combine the hashes of the children using a permutation invariant function.
\begin{align}
    hash(n)=
    \begin{cases}
            \text{mix\_hash}(l_n) & \text{if $n$ is leaf.}
            \\
            \bigoplus_{c \in \mathcal{C}_n} \text{mix\_hash}(hash(c)) & \text{otherwise.}
            \label{eq:hash}
    \end{cases}
\end{align}
Here, $l_n$ is a unique identifier for the (negated) variable of the leaf node $n$, $\bigoplus$ is a permutation-invariant operation such as XOR, and mix\_hash is a function that disperses the bits of the hash.

We can now merge all equivalent nodes in a multi-rooted circuit by computing the node hashes in a bottom-up pass and merging all nodes with the same hash, \eg by storing the nodes in a hash map.

The efficient deduplication via Merkle hashes is not only useful in the context of merging single-rooted circuits; duplicate nodes may also occur within a single circuit.
This can, for instance, happen when the circuit is an SDD\footnote{An SDD node represents an $\lor$-gate, and is defined as a set of tuples each representing $\land$-gates. This structure typically does not share tuples, and thus does not automatically reuse duplicate $\land$-gates.}.
Another source of duplicate nodes stems from the layerization procedure in Section~\ref{sec:linked2layer}.
Concretely, the connection of a node $n$ to a higher-up node through a chain of unary nodes results in equivalent chains if node $n$ has multiple higher-up parent nodes. 
Using Merkle hashes resolves this issue as it automatically merges multiple equivalent node chains, limiting the overhead of adding new nodes.

Lastly, we make sure that all circuits have the same height by artificially extending a circuit with unary nodes at the root if necessary.
This results in the last layer of our multi-rooted circuit containing the root nodes of every added circuit. %

\section{Tensorizing Layered Circuits}\label{sec:evaluation}

In the previous section, we organized d-DNNF circuits into layers by assigning each node $n$ in the circuit a height $h_n$. We also avoided duplicate nodes by computing a unique Merkle hash $hash(n)$ for each node.
We proceed in this section with mapping the layered linked nodes to a layered computation graph where layers are evaluated sequentially and computations within a layer can be parallelized.

To this end, we make a simple, yet powerful, observation: the current layer can be computed from the output of the previous layer by only using \emph{indexing and aggregation}. To see this, we first impose an arbitrary order on the nodes within each layer. This means we can write the values of all nodes in a layer $L_i$ as a vector $\nodes_i$.
For our running example, we simply pick the order already insinuated by the graphical representation in Figure~\ref{fig:lbcircuit}, except for the input layer where we order lexicographically: $\nodes_0 = [p_a, p_{\neg a},p_b, p_{\neg b}, p_c, p_{\neg c}, p_d, p_{\neg d}]^\intercal$.

Now, in order to compute $\nodes_l$ from $\nodes_{l-1}$ we use two vectors of indices: $\tS_l$ and $\tR_l$. For each edge between $\nodes_{l-1}$ and $\nodes_l$, $\tS_l$ contains the index of the input node, while $\tR_l$ contains the index of the output node. We exemplify this for a single layer in Figure~\ref{fig:segement_l1}. By imposing an order on the input nodes and by providing the set of vectors $\{ \tS_1\dots \tS_L  \}$ and $\{ \tR_1\dots \tR_L  \}$, we entirely characterize an arithmetic circuit. This means that we can use these vectors instead of a linked node representation to evaluate the circuit.

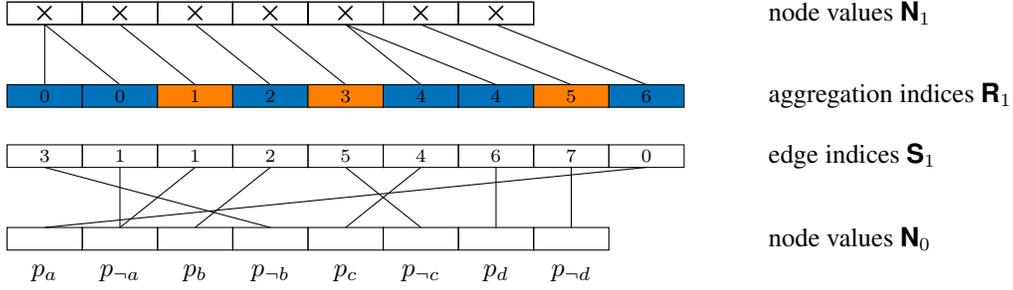
\begin{figure}
    \centering
    \begin{tikzpicture}

\def\cellwidth{1} 
\def\cellheight{0.3} 

\def\intersep{0.8}
\def\intrasep{0.5} 

\foreach \x  in {0,1,...,7} {
    \draw (\x*\cellwidth, 0) rectangle (\x*\cellwidth + \cellwidth, \cellheight);
}

\node at  (0*\cellwidth + 0.5*\cellwidth, -0.3) {$p_{a}$};
\node at  (1*\cellwidth + 0.5*\cellwidth, -0.3) {$p_{\neg a}$};
\node at  (2*\cellwidth + 0.5*\cellwidth, -0.3) {$p_{b}$};
\node at  (3*\cellwidth + 0.5*\cellwidth, -0.3) {$p_{\neg b}$};
\node at  (4*\cellwidth + 0.5*\cellwidth, -0.3) {$p_{c}$};
\node at  (5*\cellwidth + 0.5*\cellwidth, -0.3) {$p_{\neg c}$};
\node at  (6*\cellwidth + 0.5*\cellwidth, -0.3) {$p_{d}$};
\node at  (7*\cellwidth + 0.5*\cellwidth, -0.3) {$p_{\neg d}$};

\foreach \x  in {0,1,...,8} {
    \draw (\x*\cellwidth, \cellheight + \intersep) rectangle (\x*\cellwidth + \cellwidth,  2*\cellheight + \intersep);
}

\draw[fill=RoyalBlue]  (0*\cellwidth, 2*\cellheight + \intersep + \intrasep) rectangle (0*\cellwidth + \cellwidth, 3*\cellheight + \intersep + \intrasep);
\draw[fill=RoyalBlue]  (1*\cellwidth, 2*\cellheight + \intersep + \intrasep) rectangle (1*\cellwidth + \cellwidth, 3*\cellheight + \intersep + \intrasep);
\draw[fill=orange]  (2*\cellwidth, 2*\cellheight + \intersep + \intrasep) rectangle (2*\cellwidth + \cellwidth, 3*\cellheight + \intersep + \intrasep);
\draw[fill=RoyalBlue]  (3*\cellwidth, 2*\cellheight + \intersep + \intrasep) rectangle (3*\cellwidth + \cellwidth, 3*\cellheight + \intersep + \intrasep);
\draw[fill=orange]  (4*\cellwidth, 2*\cellheight + \intersep + \intrasep) rectangle (4*\cellwidth + \cellwidth, 3*\cellheight + \intersep + \intrasep);
\draw[fill=RoyalBlue]  (5*\cellwidth, 2*\cellheight + \intersep + \intrasep) rectangle (5*\cellwidth + \cellwidth, 3*\cellheight + \intersep + \intrasep);
\draw[fill=RoyalBlue]  (6*\cellwidth, 2*\cellheight + \intersep + \intrasep) rectangle (6*\cellwidth + \cellwidth, 3*\cellheight + \intersep + \intrasep);
\draw[fill=orange]  (7*\cellwidth, 2*\cellheight + \intersep + \intrasep) rectangle (7*\cellwidth + \cellwidth, 3*\cellheight + \intersep + \intrasep);
\draw[fill=RoyalBlue]  (8*\cellwidth, 2*\cellheight + \intersep + \intrasep) rectangle (8*\cellwidth + \cellwidth, 3*\cellheight + \intersep + \intrasep);

\foreach \x  in {0,1,...,6} {
    \draw (\x*\cellwidth, 3*\cellheight + 2*\intersep + \intrasep) rectangle (\x*\cellwidth + \cellwidth,  4*\cellheight + 2*\intersep + \intrasep);
}

\node at (0*\cellwidth + 0.5*\cellwidth, \intersep + 1.5*\cellheight)  {\scriptsize $3$};
\node at (1*\cellwidth + 0.5*\cellwidth, \intersep + 1.5*\cellheight)  {\scriptsize $1$};
\node at (2*\cellwidth + 0.5*\cellwidth, \intersep + 1.5*\cellheight)  {\scriptsize $1$};
\node at (3*\cellwidth + 0.5*\cellwidth, \intersep + 1.5*\cellheight)  {\scriptsize $2$};
\node at (4*\cellwidth + 0.5*\cellwidth, \intersep + 1.5*\cellheight)  {\scriptsize $5$};
\node at (5*\cellwidth + 0.5*\cellwidth, \intersep + 1.5*\cellheight)  {\scriptsize $4$};
\node at (6*\cellwidth + 0.5*\cellwidth, \intersep + 1.5*\cellheight)  {\scriptsize $6$};
\node at (7*\cellwidth + 0.5*\cellwidth, \intersep + 1.5*\cellheight)  {\scriptsize $7$};
\node at (8*\cellwidth + 0.5*\cellwidth, \intersep + 1.5*\cellheight)  {\scriptsize $0$};

\node at (0*\cellwidth + 0.5*\cellwidth, \intersep + \intrasep + 2.5*\cellheight)  {\scriptsize $0$};
\node at (1*\cellwidth + 0.5*\cellwidth, \intersep + \intrasep + 2.5*\cellheight)  {\scriptsize $0$};
\node at (2*\cellwidth + 0.5*\cellwidth, \intersep + \intrasep + 2.5*\cellheight)  {\scriptsize $1$};
\node at (3*\cellwidth + 0.5*\cellwidth, \intersep + \intrasep + 2.5*\cellheight)  {\scriptsize $2$};
\node at (4*\cellwidth + 0.5*\cellwidth, \intersep + \intrasep + 2.5*\cellheight)  {\scriptsize $3$};
\node at (5*\cellwidth + 0.5*\cellwidth, \intersep + \intrasep + 2.5*\cellheight)  {\scriptsize $4$};
\node at (6*\cellwidth + 0.5*\cellwidth, \intersep + \intrasep + 2.5*\cellheight)  {\scriptsize $4$};
\node at (7*\cellwidth + 0.5*\cellwidth, \intersep + \intrasep + 2.5*\cellheight)  {\scriptsize $5$};
\node at (8*\cellwidth + 0.5*\cellwidth, \intersep + \intrasep + 2.5*\cellheight)  {\scriptsize $6$};

\foreach \x in {0,1,...,6} {
    \draw (\x*\cellwidth, 3*\cellheight + 2*\intersep + \intrasep) rectangle (\x*\cellwidth + \cellwidth,  4*\cellheight + 2*\intersep + \intrasep);
    \node at (\x*\cellwidth + 0.5*\cellwidth, 3*\cellheight + 2*\intersep + \intrasep + 0.5*\cellheight) {$\bm{\times}$};
}

\draw (3*\cellwidth + 0.5*\cellwidth, \cellheight) -- (0*\cellwidth + 0.5*\cellwidth, \intersep + \cellheight);
\draw (1*\cellwidth + 0.5*\cellwidth, \cellheight) -- (1*\cellwidth + 0.5*\cellwidth, \intersep + \cellheight);
\draw (1*\cellwidth + 0.5*\cellwidth, \cellheight) -- (2*\cellwidth + 0.5*\cellwidth, \intersep + \cellheight);
\draw (2*\cellwidth + 0.5*\cellwidth, \cellheight) -- (3*\cellwidth + 0.5*\cellwidth, \intersep + \cellheight);
\draw (5*\cellwidth + 0.5*\cellwidth, \cellheight) -- (4*\cellwidth + 0.5*\cellwidth, \intersep + \cellheight);
\draw (4*\cellwidth + 0.5*\cellwidth, \cellheight) -- (5*\cellwidth + 0.5*\cellwidth, \intersep + \cellheight);
\draw (6*\cellwidth + 0.5*\cellwidth, \cellheight) -- (6*\cellwidth + 0.5*\cellwidth, \intersep + \cellheight);
\draw (7*\cellwidth + 0.5*\cellwidth, \cellheight) -- (7*\cellwidth + 0.5*\cellwidth, \intersep + \cellheight);
\draw (0*\cellwidth + 0.5*\cellwidth, \cellheight) -- (8*\cellwidth + 0.5*\cellwidth, \intersep + \cellheight);

\draw (0*\cellwidth + 0.5*\cellwidth, 3*\cellheight + \intersep + \intrasep) -- (0*\cellwidth + 0.5*\cellwidth, 3*\cellheight + 2*\intersep + \intrasep);
\draw (1*\cellwidth + 0.5*\cellwidth, 3*\cellheight + \intersep + \intrasep) -- (0*\cellwidth + 0.5*\cellwidth, 3*\cellheight + 2*\intersep + \intrasep);
\draw (2*\cellwidth + 0.5*\cellwidth, 3*\cellheight + \intersep + \intrasep) -- (1*\cellwidth + 0.5*\cellwidth, 3*\cellheight + 2*\intersep + \intrasep);
\draw (3*\cellwidth + 0.5*\cellwidth, 3*\cellheight + \intersep + \intrasep) -- (2*\cellwidth + 0.5*\cellwidth, 3*\cellheight + 2*\intersep + \intrasep);
\draw (4*\cellwidth + 0.5*\cellwidth, 3*\cellheight + \intersep + \intrasep) -- (3*\cellwidth + 0.5*\cellwidth, 3*\cellheight + 2*\intersep + \intrasep);
\draw (5*\cellwidth + 0.5*\cellwidth, 3*\cellheight + \intersep + \intrasep) -- (4*\cellwidth + 0.5*\cellwidth, 3*\cellheight + 2*\intersep + \intrasep);
\draw (6*\cellwidth + 0.5*\cellwidth, 3*\cellheight + \intersep + \intrasep) -- (4*\cellwidth + 0.5*\cellwidth, 3*\cellheight + 2*\intersep + \intrasep);
\draw (7*\cellwidth + 0.5*\cellwidth, 3*\cellheight + \intersep + \intrasep) -- (5*\cellwidth + 0.5*\cellwidth, 3*\cellheight + 2*\intersep + \intrasep);
\draw (8*\cellwidth + 0.5*\cellwidth, 3*\cellheight + \intersep + \intrasep) -- (6*\cellwidth + 0.5*\cellwidth, 3*\cellheight + 2*\intersep + \intrasep);

\node[anchor=west] at (10*\cellwidth, 0.5*\cellheight) {node values $\nodes_0$};
\node[anchor=west] at (10*\cellwidth, \intersep + 1.5*\cellheight) {edge indices $\tS_1$};
\node[anchor=west] at (10*\cellwidth, \intersep + \intrasep + 2.5*\cellheight) {aggregation indices $\tR_1$};
\node[anchor=west] at (10*\cellwidth, 2*\intersep + \intrasep + 3.5*\cellheight) {node values $\nodes_1$};

\end{tikzpicture}
    \caption{
    The evaluation of the first layer in Figure~\ref{fig:circuit_example:ac}, as an indexing and aggregation operation. The symbols $\nodes_0$ and $\nodes_1$ denote the node values at layer $L_0$ and $L_1$ respectively. First, we index in $\nodes_0$ using the edge indices $\tS_1$, effectively creating a vector with the values of all edges in between $\nodes_0$ and $\nodes_1$. Next, the aggregation indices $\tR_1$ determine what edges are reduced together. As a result we obtain the node values $\nodes_1 = \text{scatter}(\nodes_0[\tS_1], \tR_1, \text{reduce}=\text{`product'})$. 
    }
    \label{fig:segement_l1}
\end{figure}

To compute $\nodes_l$, we first select relevant values from $\nodes_{l-1}$ using as index $\tS_l$, giving us $\tE_l = \nodes_{l-1}[\tS_l]$. The vector $\tE_l$ essentially contains the values of all the edges between $\nodes_l$ and $\nodes_{l-1}$. Next, we need to correctly segment the edges $\edges_l$ and aggregate the individual segments -- either by using sums or products, depending on the layer. This is done using $\tR_l$: all elements with the same index in $\tR_l$ are reduced together.
Fortunately, such segment-reduce operations are implemented as primitives in various tensor libraries such as Jax, TensorFlow, or PyTorch~\citep{tensorflow2015-whitepaper,paszke2019pytorch,jax2018github}. In Figure~\ref{fig:segement_l1}, these segments are indicated with alternating colors.

Algorithm~\ref{alg:eval} contains pseudo-code for the layerwise circuit evaluations, where we use the common \texttt{scatter} function to segment and aggregate the $\edges_l$ vectors. 
In Figure~\ref{fig:segment}, we show the full tensorized circuit representation of the arithmetic circuit in Figure~\ref{fig:circuit_example:ac}, which corresponds to the computation graph induced by Algorithm~\ref{alg:eval}. Although we depict the edge vectors $\tE_l$ in Figure~\ref{fig:segment}, we stress that these do not need to be represented explicitly because modern tensor libraries in practice fuse the indexing and segmentation operations together.

\RestyleAlgo{ruled}
\SetKwComment{Comment}{/* }{ */}
\begin{algorithm}[hbt!]
\caption{KLay Forward Evaluation}
\label{alg:eval}
\SetKwInput{KwData}{Input}
\KwData{$\nodes_0$, selection indices $\tS_1, \tS_2, \dots, \tS_L$, reduction indices $\tR_1, \tR_2, \dots, \tR_L$}
\For{$l \gets  1$ \KwTo $L$}{
    $\edges_l \gets \nodes_{l-1}[\tS_l]$\;
    \eIf{$l \bmod 2 = 0$}{
        $\nodes_l \gets \text{scatter}(\edges_l, \tR_l , \text{reduce=`sum'})$\;
    }{
        $\nodes_l \gets \text{scatter}(\edges_l, \tR_l , \text{reduce=`product'})$\;
    }
}
\KwRet{$\nodes_L$}\;
\end{algorithm}

\begin{figure}
    \centering
    \input{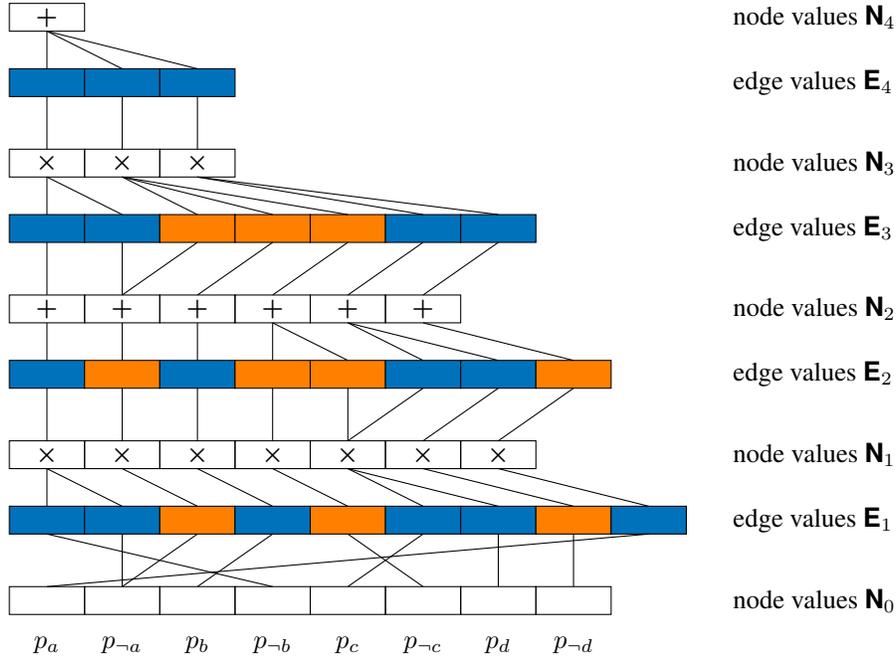}
    \caption{Tensorized \klay representation of the arithmetic circuit in Figure~\ref{fig:circuit_example:ac}. We depict the segments of the edge vectors $\tE_l$ with alternating colors.}
    \label{fig:segment}
\end{figure}

Finally, we note that in practice arithmetic circuits are usually evaluated in the logarithmic domain for numerical stability. This means that the reduction uses logsumexp and sum operations instead of the sum and product, respectively. We provide the pseudo-code in Algorithm~\ref{alg:log_eval} in Appendix~\ref{app:pseudo-code}. More generally, any pair of operations that forms a semiring may be used to evaluate the circuit \citep{kimmig2017algebraic}, as long as the semiring operations can be expressed as reduction operations in the tensor library.

\section{Related Work}
\label{sec:related}

The closest related work is the arithmetic circuit layerization present in \juice~\citep{dang2021juice}. Similar to our circuit layerization scheme, \juice takes a Boolean circuit and maps it to a set of layers that can be evaluated sequentially, although not layer per layer. To this end, \citet{dang2021juice} implemented a custom SIMD implementation for the CPU and custom CUDA kernels for the GPU. This is in contrast to \klay where we reduce circuit evaluations to a sequence of index and scatter-reduce operations, which are already efficiently implemented in the modern deep learning stack. In our experiments, we show that \klay dramatically outperforms \juice in terms of run time on CPU and more importantly on GPU as well. Noteworthy here is that our experimental evaluation also shows that \juice's GPU implementation is slower than their CPU implementation -- hinting at missed parallelization opportunities. 

The difficulty of running arithmetic circuits on GPUs was also pointed out by~\citet{shah2020acceleration,shah2021dpu}. While \citet{shah2020acceleration,shah2021dpu} focused on developing hardware accelerators for arithmetic circuits, they also implemented custom circuit evaluations exploiting to a certain degree SIMD instructions and GPU parallelization. They found that CPU circuit evaluations outperformed the GPU implementations. 
\citet{shah2020acceleration} concluded that arithmetic circuits were simply too sparse to be run efficiently on GPUs. By layerizing arithmetic circuits and interpreting the product and sum layers as indexing and scatter-reduce operations, we are able to refute this claim. We provide experimental evidence for this in Section~\ref{sec:experiments}.

Notable is also the work of \citet{vasimuddin2018parallel}, who proposed an efficient CPU implementation for evaluating arithmetic circuits deployed on multiple CPU cores. They deemed an efficient GPU implementation to be impractical due to the high demands on memory bandwidth. We can again refute this claim.

Arithmetic circuits are closely related to the model class of probabilistic circuits~\citep{vergari2021compositional}. The main difference is that the sum units for the latter are parameterized using mixture weights, while the former does not contain such weights. 
In order to run probabilistic circuits efficiently on the GPU, implementations usually rely on casting circuit evaluations as dense matrix-vector product~\citep{peharz2020random,peharz2020einsum,galindez2019towards,mari2023unifying,sommer2021spnc}. This idea has recently been generalized by \citet{liuscaling}, who allow for the presence of block-sparsity in the matrix that encodes a layer. By exploiting this sparsity via custom kernels, they widened the class of probabilistic circuits that can be run efficiently on GPUs. Nevertheless, probabilistic circuits are usually far more dense than arithmetic circuits that are compiled from a logical theory.
Unfortunately, this prevents the techniques developed for probabilistic circuits to be effective in the context of arithmetic circuits. Similarly, having densely parameterized sum nodes also limits the relevance of techniques developed for arithmetic circuits for probabilistic circuits, \eg the indexing and segmenting scheme of \klay circuit evaluations.

Besides algorithmic advances, specialized hardware solutions have also been proposed to deal with the irregularity of the computational graph of an arithmetic circuit~\citep{dadu2019towards,shah2020acceleration,shah2021dpu}. However, these approaches have the drawback that they would require the purchase of non-commoditized hardware. While this could partially be remedied by the use of FPGAs, as done by \citet{sommer2018automatic,weber2022exploiting,choi2023fpga}, any custom hardware retains a communication overhead.
Specifically, in the context of neurosymbolic AI, one needs to pass the output of a neural network to an arithmetic circuit. If the neural net and the circuit are on two different devices, \eg a GPU and an FPGA, the latency of data transfer can counteract gains in evaluation speed.

\section{Experimental Evaluation}
\label{sec:experiments}

We implement \klay as a Python library supporting two popular tensor libraries: PyTorch and Jax. 
We evaluate the runtime performance of \klay on several synthetic benchmarks and neurosymbolic experiments.
All experiments were conducted on the same machine, which has an NVIDIA GeForce RTX 4090 as GPU and an Intel i9-13900K as CPU.

\subsection{Synthetic Benchmarks}

We first consider the performance of \klay on a set of synthetic circuits, by randomly generating logical formulas in 3-CNF. We compile the 3-CNF formulas into d-DNNF circuits, more specifically SDD circuits, using the PySDD library\footnote{\url{https://github.com/wannesm/PySDD}}. By changing the number of variables and clauses in the CNF, we vary the size of the compiled circuits over 5 orders of magnitude. Figure~\ref{fig:sdd_bench} compares the performance of \klay with the native post-order traversal from PySDD implemented in C. We report results for both the real and logarithmic semiring. As \juice does not support the logarithmic semiring and Jax does not support backpropagation on scatter multiplication, these are excluded from the respective comparisons. In Appendix~\ref{app:experiments}, we repeat the same experiment using the D4 knowledge compiler \citep{lagniez2017improved} instead of PySDD. %

Our results in Figure~\ref{fig:sdd_bench} indicate that on large circuits, \klay on GPU outperforms all baselines with over one order of magnitude. Due to SIMD and multi-core parallelization, \klay on CPU is still considerably faster than the baselines. \juice does not include results for the largest instances due to a timeout after 30 minutes. \klay attains best results with Jax, due to its superior JIT compilation and kernel fusion.

In Figure~\ref{fig:sdd_stats}, we display the size and sparsity of the circuits in our synthetic benchmark. Remarkably, \klay on average has fewer nodes than the original SDD, meaning the node deduplication outweighs the overhead of introducing unary node chains. As we discuss in Appendix~\ref{app:experiments}, this is not the case for circuits compiled by D4. Figure~\ref{fig:sdd_stats} (right) shows that larger circuits are increasingly sparse, with less than one in a million edges being present for the largest circuits.

\begin{figure}
    \centering
    \includegraphics[width=\linewidth]{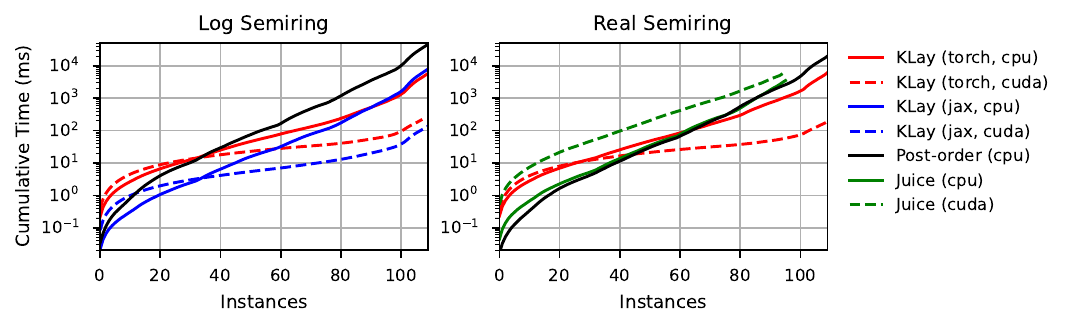}
    \caption{Cumulative runtime in milliseconds for the forward and backward pass on a circuit. That is, we plot the combined run time of the $n$ fastest circuit evaluations against the number of $n$ circuits. 
    Timings are averaged over 10 runs per SDD. The SDDs are randomly generated from 3-CNF, where the difficulty is varied by the number of variables and clauses. The left and right figure show cumulative evaluation times for the logarithmic and real semiring, respectively.}
    \label{fig:sdd_bench}
\end{figure}

\begin{figure}
    \centering
    \includegraphics[width=0.8\linewidth]{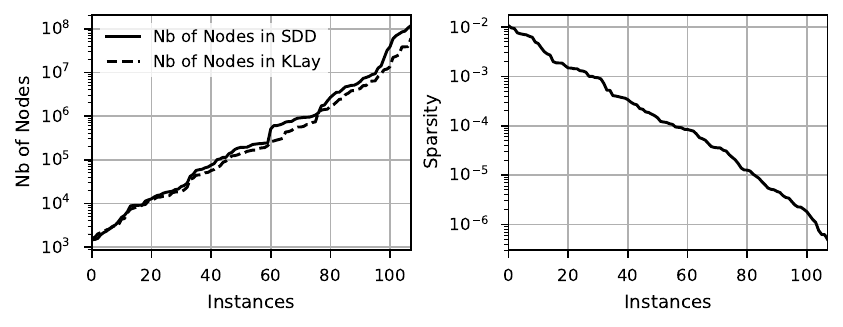}
    \caption{(Left) Number of nodes in the original SDD, compared to the \klay representation. The instances are the same as in Figure~\ref{fig:sdd_bench}. (Right) The sparsity of \klay, meaning the number of edges in the circuit divided by the number of edges in a dense interconnection of all layers.}
    \label{fig:sdd_stats}
\end{figure}

\subsection{Neurosymbolic Benchmarks}

Next, we consider the use of \klay in neurosymbolic learning, by measuring the runtime of calculating the gradient on the circuits of several neurosymbolic tasks. As a baseline, we consider the naive evaluation in PyTorch, which is the standard approach performed by existing exact probabilistic neurosymbolic approaches where each node is evaluated individually. We take four different neurosymbolic benchmarks from the literature. The Sudoku benchmark is a classification problem, determining whether a $4\times 4$ grid of images forms a valid Sudoku \citep{augustine2022visual}. The Grid \citep{xu2018semantic} and Warcraft \citep{poganvcic2020differentiation} instances require the prediction of a valid low-cost path. Finally, hierarchical multi-level classification (HMLC) concerns the consistent classification of labels in a hierarchy~\citep{giunchiglia2020coherent}. The results in Table~\ref{tab:nesy} demonstrate the drastic speedups of \klay compared to the naive baseline, improving by as far as four orders of magnitude for the Warcraft experiment.

\begin{table}
    \caption{(Top) Number of nodes in the SDD circuits of various neurosymbolic benchmarks, and the number of nodes and layers in the corresponding \klay representation. (Middle) Runtime in milliseconds for one forward and backward pass on the circuit with batch size 128, using PyTorch autograd for both \klay and the naive baseline. (Bottom) Runtime in milliseconds for one forward and backward pass on the neural network of the neurosymbolic task.}
    \label{tab:nesy}
    \centering
    \begin{tabular}{l r r r r}
    \toprule
         & Sudoku ($4 {\times} 4$) & Grid ($4 {\times} 4$) & HMLC & Warcraft ($12{\times} 12$)  \\
         \midrule
        SDD Nodes & 2\,408 & 6\,610 & 9\,730 & 1\,110\,234 \\
        \klay Nodes & 3\,926 & 6\,590 & 26\,306 & 1\,155\,063 \\
        \klay Layers & 30 & 24 & 84 & 84 \\
        \midrule
        Naive (cpu) & $17.35 \pm 0.05$ & $46.12 \pm 0.49$ & $127.00 \pm 140.03$ & $11484.53 \pm 251.94$ \\
        Naive (cuda) & $42.70 \pm 2.76$ & $107.78 \pm 4.67$ & $175.58 \pm \phantom{00}1.26$ & $18930.67 \pm 496.16$ \\
        \klay (cpu) & $0.45 \pm 0.02$ & $0.53 \pm 0.03$ & $1.41 \pm \phantom{00}0.02$ & $18.55 \pm \phantom{00}0.14$ \\
        \klay (cuda) & $0.46 \pm 0.17$ & $0.49 \pm 0.19$ & $1.03 \pm \phantom{00}0.35$ & $1.59 \pm \phantom{00}0.25$ \\        \midrule
        Neural (cpu) & $0.90 \pm 0.63$ & $0.17 \pm 0.02$ & $0.55 \pm \phantom{00}0.29$ & $242.60 \pm \phantom{00}4.81$ \\
        Neural (cuda) & $0.51 \pm 0.13$ & $0.34 \pm 0.12$ & $0.20 \pm \phantom{00}0.04$ & $2.80 \pm \phantom{00}0.07$ \\
        \bottomrule
    \end{tabular}
\end{table}

As a last experiment, we demonstrate the integration of \klay into a neurosymbolic framework. Specifically, we use \klay instead of conventional SDD circuits in DeepProbLog. We measure the training time of the MNIST-addition task in Table~\ref{tab:mnist_add}. MNIST-addition is a common neurosymbolic task where the input is two numbers represented as MNIST images and the model needs to predict their sum. For more details, we refer to \cite{manhaeve2018deepproblog}. As a reference, we also include Scallop, which aims to improve the scalability of DeepProbLog by approximating using top-k provenance \citep{li2023scallop}. While DeepProbLog and Scallop cannot perform batched inference, \klay can by using multi-rooted circuits. This is reflected in speed-ups of two orders of magnitude over the existing DeepProbLog implementation, even on rather small circuits. Even though \klay remains exact unlike Scallop, \klay also demonstrates large speedups here.

\begin{table}
    \caption{MNIST-addition training time in seconds for one epoch. We report the average and standard deviation over 10 epochs, using a batch size of 128. Both \klay and the baselines use PyTorch as autograd. T/O indicates a timeout after 2 hours.}
    \label{tab:mnist_add}
    \centering
    \begin{tabular}{l r r r}
        \toprule
        Nb of Digits & 1 & 2 & 3 \\
        \midrule
        \klay Nodes & 776 & 12\,660 & 1\,015\,867 \\
        \midrule
        DeepProbLog (cpu) & $241.63 \pm 4.38$ & $323.26 \pm 22.20$ & \multicolumn{1}{c}{T/O} \\
        Scallop (cpu, $k=3$) & $52.30 \pm 0.37$ & $601.20 \pm 14.65$ & $5345.78 \pm 171.47$ \\
        \klay (cpu) & $2.08 \pm 0.12$ & $2.10 \pm \phantom{0}0.14$ & $71.54 \pm \phantom{00}0.46$ \\
        \klay (cuda) & $1.26 \pm 0.03$ & $1.25 \pm \phantom{0}0.01$ & $8.85 \pm \phantom{00}0.02$  \\ %
        \bottomrule
    \end{tabular}
\end{table}

\section{Conclusions}

The success of neural networks has been largely attributed to their scale \citep{kaplan2020scaling}, which is realized by their effective use of hardware accelerators.
To compete, novel methods must run efficiently on the available hardware or risk losing out due to what \citet{hooker2021hardware} coined the hardware lottery.
We tackled this issue for neurosymbolic AI by introducing \klay -- a new data structure to represent arithmetic circuits that is amenable to efficient evaluations on modern GPUs or TPUs. 
Along with this representation, we contributed three algorithms for \klay. The first two algorithms map the traditional linked node representation of arithmetic circuits to the corresponding \klay representation (Algorithm~\ref{alg:klay_layerization}~and~\ref{alg:klay_tensorization}). This representation is efficiently evaluated using the third algorithm, which exploits the parallelization opportunities (Algorithm~\ref{alg:eval}).

Contrary to widely held belief, our experiments demonstrated that arithmetic circuits can be run efficiently on GPUs, despite their high degree of sparsity. To this end, a key aspect is the reduction of circuit evaluations to a sequence of indexing and scatter-reduce operations, as these can be implemented using highly optimized primitives available in modern tensor libraries.
Resolving circuit evaluations as one of the major bottlenecks present in current neurosymbolic architectures allows to further scale neurosymbolic models and tackle new problems that have been out of reach so far.

\subsubsection*{Author Contributions}

PZD conceived the conceptual idea of \klay and made an initial prototype. JM worked out the technical details and implemented the methods with the help of VD. JM proposed and carried out the experiments. All authors contributed equally to the writing. PZD supervised the project.

\subsubsection*{Acknowledgments}

This research received funding from 
the Flemish Government (AI Research Program), 
the Flanders Research Foundation (FWO) under project G097720N, 
the KU Leuven Research Fund (C14/18/062),
and the European Research Council (ERC) under the European Union’s Horizon 2020 research and innovation programme (Grant agreement No. 101142702).
PZD is supported by the Wallenberg AI, Autonomous Systems and Software Program (WASP) funded by the Knut and Alice Wallenberg Foundation.
We thank Luc De Raedt for his valuable feedback.

\subsection*{Reproducibility}

The functionality of \klay has been described in pseudo-code (Algorithms~\ref{alg:eval}, ~\ref{alg:klay_layerization}, and \ref{alg:klay_tensorization}) and has been implemented as a Python library to easily replicate all experiments in the paper, available at \url{https://github.com/ML-KULeuven/klay}.

\bibliography{references}
\bibliographystyle{iclr2025_conference}

\clearpage

\appendix

\section*{\Huge Appendix}

\FloatBarrier

\section{Proof of Theorem~\ref{thm:merkle}}
\label{app:nohashcollision}

\thmmerkle*

\begin{proof}
We provide a constructive proof. 
First, observe that nodes representing syntactically identical sub-circuits have the same height and are therefore members of the same layer.
Second, observe that nodes within a layer are identified by their set of children.
Combining these two observations, we can rely on the following inductive property: by ensuring deduplication of all nodes in layer $L_{n-1}$, it is trivial to check equivalence for the nodes in layer $L_n$. In conclusion, we can identify and deduplicate all equivalent nodes by iterating over all nodes bottom-up.
To detect equivalent nodes within a layer, we employ an efficient hashing scheme where each node $n$ is hashed as the set of the hashes of its children $\mathcal{C}_n$ (see Equation~\ref{eq:hash}).
As the set of nodes is fixed, we can assume a perfect hashing scheme devoid of any hashing conflicts \citep{belazzougui2009hash}.
As this algorithm considers each edge precisely once, it is indeed linear in the number of edges. Moreover, the hash tables requires linear memory in the number of nodes.
\end{proof}

\section{Additional Experimental Details}\label{app:experiments}

\paragraph{Synthetic Benchmarks} To generate the circuits, we randomly generate 3-CNF formulas. This requires 2 parameters, the number of clauses $k$ and the number of variables $v$. The generated 3-CNF formulas contains $k$ clauses with 3 randomly picked variables from the $v$ variables. To vary the difficulty, we generate 3-CNF with 30, 35, 40, 45, 50, 55, 60, 65, 70, 75, and 80 variables. For each variable count, we generate 10 random instances, resulting in a total of 110 instances. The clause count is also taken as half the number of variables for the SDD benchmarks and twice the number of variables for the D4 benchmarks. This difference can be attributed to D4 generating smaller circuits, so we make the instances harder.

\paragraph{Extra Experiments} In Figure~\ref{fig:d4_bench}, we repeat the same synthetic experiment as in Figure~\ref{fig:sdd_bench} but using a top-down d-DNNF knowledge compiler, D4, instead of a bottom-up SDD compiler. As a baseline, we use an optimized Rust implementation of the node-by-node post-order evaluation algorithm. These circuits do not contain duplicate nodes and are somewhat less balanced, leading to a larger overhead in terms of extra nodes, as is visible on the right of Figure~\ref{fig:d4_bench}. Nonetheless, \klay still outperforms the baseline by a large margin. Finally, in Table~\ref{tab:nesy} we repeat the neurosymbolic experiments from Table~\ref{tab:nesy} with batch size 128 instead of 1.

\begin{figure}
    \centering
    \includegraphics[width=\linewidth]{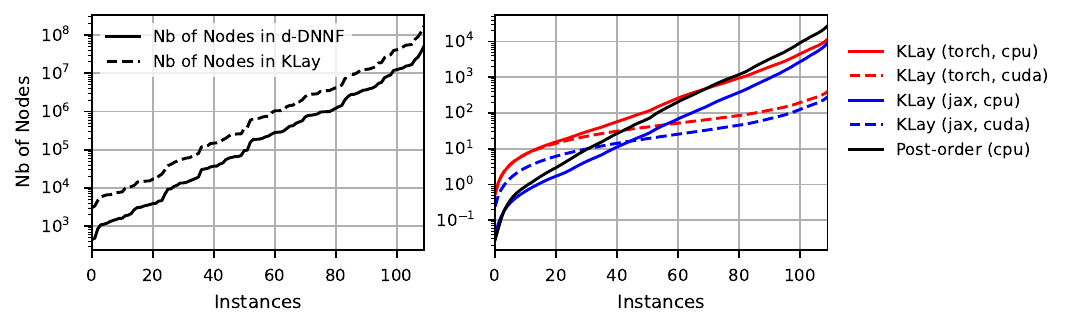}
    \caption{(Left) Number of nodes in the original d-DNNF, compared to \klay's layerized circuit. (Right) Cumulative runtime in milliseconds for the forward and backward pass on a d-DNNF circuit in the log semiring. Timings for each individual circuit are averaged over 10 runs. Each instance is a randomly generated logical formula in 3-CNF, compiled into a d-DNNF circuit using D4. The number of variables and clauses in the CNF was varied to achieve different levels of difficulty.}
    \label{fig:d4_bench}
\end{figure}

\begin{table}
    \caption{(Top) Number of nodes in the SDD circuits of various neurosymbolic benchmarks, and the number of nodes and layers in the corresponding \klay representation. (Middle) Runtime in milliseconds for one forward and backward pass on the circuit with batch size 128, using PyTorch autograd for both \klay and the naive baseline. (Bottom) Runtime in milliseconds for one forward and backward pass on the neural network of the neurosymbolic task.}
    \label{tab:nesy_128}
    \centering
    \begin{tabular}{l r r r r}
    \toprule
         & Sudoku ($4 {\times} 4$) & Path ($4 {\times} 4$) & HMLC & Warcraft ($12{\times} 12$)  \\
         \midrule
        SDD Nodes & 2\,408 & 6\,610 & 9\,730 & 1\,110\,234 \\
        \klay Nodes & 3\,926 & 6\,590 & 26\,306 & 1\,155\,063 \\
        \klay Layers & 30 & 24 & 84 & 84 \\
        \midrule
        Naive (cpu) & $21.55 \pm 0.83$ & $55.46 \pm 5.83$ & $122.37 \pm 35.33$ & $12225.93 \pm 626.03$ \\
        Naive (cuda) & $47.20 \pm 5.29$ & $112.86 \pm 2.49$ & $201.07 \pm 18.62$ & $19324.75 \pm \phantom{0}74.16$ \\
        \klay (cpu) & $5.64 \pm 0.06$ & $4.96 \pm 0.15$ & $15.43 \pm \phantom{0}0.17$ & $514.20 \pm \phantom{0}22.14$ \\
        \klay (cuda) & $3.03 \pm 0.05$ & $2.47 \pm 0.03$ & $8.76 \pm \phantom{0}1.56$ & $72.59 \pm \phantom{00}0.16$ \\       
        \midrule
        Neural (cpu) & $23.05 \pm 3.63$ & $0.72 \pm 0.02$ & $0.63 \pm \phantom{0}0.15$ & $17420.27 \pm 202.31$ \\
        Neural (cuda) & $1.24 \pm 0.21$ & $0.30 \pm 0.02$ & $0.21 \pm \phantom{0}0.04$ & $73.20 \pm \phantom{00}0.02$ \\
        \bottomrule
    \end{tabular}
\end{table}

\FloatBarrier

\section{Pseudo-code}\label{app:pseudo-code}

Algorithm~\ref{alg:klay_layerization}~and~\ref{alg:klay_tensorization} contains pseudo-code of the previously discussed layerization and tensorization procedures. Algorithm~\ref{alg:log_eval} contains the pseudo-code for the evaluation of \klay in the logarithmic semiring instead of the real semiring.

\RestyleAlgo{ruled}
\SetKwComment{Comment}{/* }{ */}
\begin{algorithm}[hbt!]
\caption{KLay Layerization}
\label{alg:klay_layerization}
\SetKwInput{KwData}{Input}
\KwData{Boolean Circuit $C$ as linked nodes.}
$\text{layers} \gets [\ ]$\;
$\text{height} \gets [ \ ]$\;
$\text{hashes} \gets [ \ ]$\;
\For{node $n$ in the nodes of $C$, children before parents}{
\eIf{$n$ is a leaf node}{
    $\text{height}[n] \gets 0$ \;
    $\text{hashes}[n] \gets hash(n)$\;
}{
    $\text{height}[n] \gets 1 + \max_{c \in \text{children}(n)} \text{height}[c] $\;
    /* Bring children to the prior layer */ \\
    \For{child node $c$ in children($n$)} {
    \While{height[c]+1 $\neq$ height[n]}{
        $c \gets \text{new unary node with child } c$\;
        $\text{height}[c] \gets \text{height}[\text{child}(c)] + 1$\;
        $\text{hashes}[c] \gets hash({\text{hashes}[\text{child}(c)])}$\;
        $\text{layers}[\text{height}[c]][\text{hashes}[c]] \gets c$\;
    }
    }
    $\text{hashes}[n] \gets  \bigoplus_{c \in \text{children}(n)}$ $hash$(\text{hashes}$[c]$)
}
/* Add node to its layer */ \\
$\text{layers}[\text{height}[n]][\text{hashes}[n]] \gets  n$\;
}
\KwRet{$\text{layers}$}\;
\end{algorithm}

\RestyleAlgo{ruled}
\SetKwComment{Comment}{/* }{ */}
\begin{algorithm}[hbt!]
\caption{KLay Tensorization}
\label{alg:klay_tensorization}
\SetKwInput{KwData}{Input}
\KwData{layers.}
/* We assume the nodes in each layer are ordered, such that index(n) is the index of node n in its layer. */ \\
\For{layer $l$ in layers}{
    $\tS_l \gets [\ ]$\;
    $\tR_l \gets [\ ]$\;
    \For{node $n$ in layer $l$}
    {  
    \For{child $c$ of node $n$} {
        $\tR_l$.push(index($n$))\; 
        $\tS_l$.push(index($c$))\;
    }
    }
}
\KwRet{$\{\tS_1, \dots, \tS_L \}, \{\tR_1, \dots, \tR_L \}$}\;
\end{algorithm}

\RestyleAlgo{ruled}
\SetKwComment{Comment}{/* }{ */}
\begin{algorithm}[hbt!]
\caption{KLay Forward Evaluation in the Logarithmic Semiring}
\label{alg:log_eval}
\SetKwInput{KwData}{Input}
\KwData{$\nodes_0$, selection indices $\tS_1, \tS_2, \dots, \tS_L$, reduction indices $\tR_1, \tR_2, \dots, \tR_L$, epsilon $\epsilon$}
\For{$l \gets  1$ \KwTo $L$}{
    $\edges_l \gets \nodes_{l-1}[\tS_l]$\;
    \eIf{$l \bmod 2 = 0$}{
        /* As logsumexp reduction is not always implemented for scatter, we perform it using max and sum scatter operations. */ \\
        $\tM \gets \text{scatter}(\tE_l, \tR_l, \text{reduce=`max'})$\;
        $\tT \gets \text{scatter}(\exp(\tE_l - \tM[\tR_l]), \tR_l, \text{reduce=`sum'})$\;
        $\nodes_l \gets \log (\tT + \epsilon) + \tM$\;
    }{
        $\nodes_l \gets \text{scatter}(\edges_l, \tR_l , \text{reduce=`sum'})$\;
    }
}
\KwRet{$\nodes_L$}\;
\end{algorithm}

\section{Neurosymbolic Example}\label{app:nesy_example}

We demonstrate how neural networks can be combined with circuits on a simplified variant of the canonical MNIST-addition task \citep{manhaeve2018deepproblog}. In this task, two MNIST images are given containing either the digit zero or one, and the goal is to predict the sum of the two digits. Note that the labels of the individual digits are not given. There are four distinct possibilities for the ground truth labels of the images. Either both images are zero and the sum is also zero, or one image is zero and the other is one resulting in a sum of one, or both are one and the sum is two.

Let us write $p(\text{Img}_i = j)$ for the probability that image $i$ contains the digit $j$, and $p(\text{Sum} = k)$ for the probability that the sum equals $k$. Now it follows that 
\begin{align*}
p(\text{Sum} = 0) &= p(\text{Img}_0 = 0) \cdot p(\text{Img}_1 = 0) \\
p(\text{Sum} = 1) &= p(\text{Img}_0 = 0) \cdot p(\text{Img}_1 = 1) + p(\text{Img}_0 = 1) \cdot p(\text{Img}_1 = 0) \\
p(\text{Sum} = 2) &= p(\text{Img}_0 = 1) \cdot p(\text{Img}_1 = 1)
\end{align*}

We can now encode the above probabilities in a multi-rooted circuit, which we visualize in Figure~\ref{fig:mnist_example}. Observe that each root computes one of the probabilities for $p(\text{Sum} = k)$.
\begin{figure}
    \centering
    \include{figures/mnist_example}
    \caption{A simple neurosymbolic model for the MNIST-addition task. Two MNIST images are given as input. First, a vision model classifies the probability of the image containing a zero or one. Next, a circuit uses these probabilities to calculate the probabilities of the different possible sums.}
    \label{fig:mnist_example}
\end{figure}

\end{document}